\documentclass{article}

\PassOptionsToPackage{numbers, compress}{natbib}



\usepackage[final]{neurips_2023}


\usepackage[utf8]{inputenc} 
\usepackage[T1]{fontenc}    
\usepackage{nicefrac}       
\usepackage{microtype}      


\usepackage{amsmath, amssymb, amsthm}
\usepackage{mathtools}
\usepackage{bbm}
\usepackage{dsfont}

\usepackage[usenames,dvipsnames]{xcolor}
\definecolor{shadecolor}{gray}{0.9}

\newcommand{\red}[1]{\textcolor{BrickRed}{#1}}



\usepackage{graphicx}
\usepackage[labelfont=bf]{caption}
\usepackage[format=hang]{subcaption}


\usepackage{booktabs}
\usepackage{multirow}
\usepackage{graphicx}


\usepackage{soul}
\usepackage{algorithm,algorithmicx,algpseudocode}

\usepackage{listings}
\usepackage{fancyvrb}
\fvset{fontsize=\small}


\usepackage{natbib}
\bibliographystyle{abbrvnat}
\usepackage[colorlinks,linktoc=all]{hyperref}
\usepackage[all]{hypcap}
\hypersetup{citecolor=NavyBlue}
\hypersetup{linkcolor=NavyBlue}
\hypersetup{urlcolor=NavyBlue}
\usepackage[nameinlink,capitalise]{cleveref}
\creflabelformat{equation}{#2\textup{#1}#3}  


\lstdefinestyle{mystyle}{
    commentstyle=\color{OliveGreen},
    numberstyle=\tiny\color{black!60},
    stringstyle=\color{BrickRed},
    basicstyle=\ttfamily\scriptsize,
    breakatwhitespace=false,
    breaklines=true,
    captionpos=b,
    keepspaces=true,
    numbers=none,
    numbersep=5pt,
    showspaces=false,
    showstringspaces=false,
    showtabs=false,
    tabsize=2
}
\lstset{style=mystyle}

\usepackage[acronym,nowarn,section,nogroupskip,nonumberlist]{glossaries}

\glsdisablehyper{}
\newcommand{\acaps}[1]{{\scshape #1}}
\newacronym[\glslongpluralkey={Conditional Neural Processes}]{cnp}{\acaps{cnp}}{Conditional Neural Process}

\usepackage{amsmath, amssymb, amsthm}
\usepackage{mathtools}
\usepackage{bbm}
\usepackage{dsfont}


\newcommand{\bof}{\mathbf{f}}
\newcommand{\bg}{\mathbf{g}}

\newcommand{\bu}{\mathbf{u}}

\newcommand{\bx}{\mathbf{x}}
\newcommand{\by}{\mathbf{y}}




\newcommand{\calA}{{\mathcal{A}}}

\newcommand{\calE}{{\mathcal{E}}}

\newcommand{\calJ}{{\mathcal{J}}}

\newcommand{\calN}{{\mathcal{N}}}
\newcommand{\calO}{{\mathcal{O}}}

\newcommand{\calX}{{\mathcal{X}}}
\newcommand{\calY}{{\mathcal{Y}}}

\newcommand{\bbE}{\mathbb{E}}

\newcommand{\bbR}{\mathbb{R}}




\theoremstyle{plain}

\theoremstyle{definition}

\theoremstyle{remark}

\newcommand{\dee}{\mathrm{d}}

\DeclareMathOperator*{\argmin}{arg\,min}

\newcommand{\KL}{D_\mathrm{KL}}
\newcommand{\tr}{^\top}

\newcommand{\given}{\,|\,}


\def\[#1\]{\begin{equation}\begin{aligned}#1\end{aligned}\end{equation}}
\newcommand{\spm}[1]{\scriptstyle{\pm#1}}
\usepackage{thmtools} 
\usepackage{thm-restate}

\title{Function Space Bayesian Pseudocoreset \\ for Bayesian Neural Networks}

%

\author{%
  Balhae Kim$^{1}$, 
  Hyungi Lee$^{1}$,
  Juho Lee$^{1,2}$\\
  KAIST AI$^1$, AITRICS$^2$\\
  \texttt{\{balhaekim,\,lhk2708,\,juholee\}@kaist.ac.kr} \\
}

\begin{document}

\maketitle

\begin{abstract}
A Bayesian pseudocoreset is a compact synthetic dataset summarizing essential information of a large-scale dataset and thus can be used as a proxy dataset for scalable Bayesian inference. Typically, a Bayesian pseudocoreset is constructed by minimizing a divergence measure between the posterior conditioning on the pseudocoreset and the posterior conditioning on the full dataset. However, evaluating the divergence can be challenging, particularly for the models like deep neural networks having high-dimensional parameters. In this paper, we propose a novel Bayesian pseudocoreset construction method that operates on a function space. Unlike previous methods, which construct and match the coreset and full data posteriors in the space of model parameters (weights), our method constructs variational approximations to the coreset posterior on a function space and matches it to the full data posterior in the function space. By working directly on the function space, our method could bypass several challenges that may arise when working on a weight space, including limited scalability and multi-modality issue. Through various experiments, we demonstrate that the Bayesian pseudocoresets constructed from our method enjoys enhanced uncertainty quantification and better robustness across various model architectures. 
\end{abstract}
\section{Introduction}
\label{main:sec:introduction}
Deep learning has achieved tremendous success, but its requirement for large amounts of data makes it often inefficient or infeasible in terms of resources and computation. To enable continuous learning like humans, it is necessary to learn from a large number of data points in a continuous manner, which requires the ability to discern and retain important information. This motivates the learning of a coreset, a small dataset that is informative enough to represent a large dataset.

On the other hand, the ability to incorporate uncertainties into predictions is essential for real-world applications, as it contributes to the safety and reliability of a model. One approach to achieving this is by adopting a Bayesian framework, where a prior distribution is established to represent our initial belief about the models. This belief is then updated through inference of posterior distributions based on the acquired knowledge. Although this approach shows promise, scalability becomes a concern when working with large-scale datasets during Bayesian inference. To address this issue, a potential solution is to employ a Bayesian coreset. A Bayesian coreset is a small subset of the original dataset where the posterior conditioning on it closely approximates the original posterior conditioning on the full dataset. Once the Bayesian coreset is trained, it can be utilized as a lightweight proxy dataset for subsequent Bayesian inference or as a replay buffer for continual learning or transfer learning.

A Bayesian coreset is constructed by selecting a subset from a large dataset. However, recent research suggests that this approach may not be effective, especially in high-dimensional settings~\citep{manousakas2020bayesian}. Instead, an alternative method of synthesizing a coreset, wherein the coreset is learned as trainable parameters, has been found to significantly enhance the quality of the approximation. This synthesized coreset is referred to as a Bayesian pseudocoreset. The process of learning a Bayesian pseudocoreset involves minimizing a divergence measure between the posterior of the full dataset and the posterior of the pseudocoreset. However, learning Bayesian pseudocoresets is generally challenging due to the intractability of constructing both the full dataset posterior and the pseudocoreset posterior, as well as the computation of the divergence between them, which necessitates approximation. Consequently, existing works on Bayesian pseudocoresets have primarily focused on small-scale problems~~\citep{manousakas2020bayesian,chen2022sparseham,manousakas2023blackbox, naik2023fast}. Recently, \citet{kim2022pseudocoresets} introduced a scalable method for constructing Bayesian pseudocoresets using variational Gaussian approximation for the posteriors and minimizing forward KL divergence. Although their method shows promise, it still demands substantial computational resources for high-dimensional models like deep neural networks.

In this paper, we present a novel approach to enhance the scalability of Bayesian pseudocoreset construction, particularly for Bayesian neural networks (BNNs) with a large number of parameters. Our proposed method operates in \emph{function space}. When working with BNNs, it is common to define a prior distribution on the weight space and infer the corresponding weight posterior distribution, which also applies to Bayesian pseudocoreset construction. However, previous studies~\citep{sun2019functional,rudner2022tractable} have highlighted the challenge of interpreting weights in high-dimensional neural networks, making it difficult to elicit meaningful prior distributions. Additionally, in high-dimensional networks, the loss surfaces often exhibit a complex multimodal structure, which means that proximity in the weight space does not necessarily imply proximity in the desired prediction variable~\citep{Pan2020fromp, rudner2022continual}. This same argument can be applied to Bayesian pseudocoreset construction, as matching the full data and pseudocoreset posteriors in the weight space may not result in an optimal pseudocoreset in terms of representation power and computational scalability.

To be more specific, our method constructs a variational approximation to the pseudocoreset posteriors in function space by linearization and variational approximation to the true posterior. Then we learn Bayesian pseudocoreset by minimizing a divergence measure between the full data posterior and the pseudocoreset posterior in the function space. Compared to the previous weight space approaches, our method readily scales to the large models for which the weight space approaches were not able to compute. Another benefit of function space matching is that it does not constrain the architectures of the neural networks to be matched, provided that their inherited function space posteriors are likely to be similar. So for instance, we can train with multiple neural network architectures simultaneously with varying numbers of neurons or types of normalization layers, and we empirically observe that this improves the architectural robustness of the learned pseudocoresets. Moreover, it has another advantage that the posteriors learned from the Bayesian pseudocoreset in function space have better out-of-distribution (OOD) robustness, similar to the previous reports showing the benefit of function space approaches in OOD robustness~\citep{rudner2022tractable}.

In summary, this paper presents a novel approach to creating a scalable and effective Bayesian pseudocoreset using function space variational inference. The resulting Bayesian pseudocoreset is capable of being generated in high-dimensional image and deep neural network settings and has better uncertainty quantification abilities compared to weight space variational inference. Additionally, it has better architectural robustness. We demonstrate the efficiency of the function space Bayesian pseudocoreset through the various experiments.

\newcommand{\ty}{\tilde{y}}
\newcommand{\tby}{\tilde{\by}}
\section{Background}
\label{main:sec:background}
\subsection{Bayesian pseudocoresets}
In this paper, we focus on probabilistic models for supervised learning problem. Let $\theta \in \Theta$ be a parameter, and let $p(y\given x,\theta)$ be a probabilistic model indexed by the parameter $\theta$. Given a set of observations $\bx := (x_i)_{i=1}^n$ and the set of labels $\by := (y_i)_{i=1}^n$ with each $x_i \in \calX$ and $y_i \in \calY$, we are interested in updating our prior belief $\pi_0(\theta)$ about the parameter to the posterior,
\[
\pi_\bx(\theta) := \frac{\pi_0(\theta)}{Z(\by\given\bx)} \prod_{i=1}^n p(y_i\given x_i, \theta), \quad Z(\by\given\bx) := \int_\Theta \prod_{i=1}^n p(y_i\given x_i,\theta)\pi_0(\dee\theta).
\]
However, when the size of the dataset $n$ is large, the computation of the posterior distribution can be computationally expensive and infeasible. To overcome this issue, Bayesian pseudocoresets are constructed as a synthetic dataset $\bu=(u_j)_{j=1}^m$ with $m\ll n$ with the set of labels $\tby := (\ty_j)_{j=1}^m$ where the posterior conditioning on it approximates the original posterior $\pi_\bx(\theta)$.
\[
\pi_\bu(\theta)=\frac{\pi_0(\theta)}{Z(\tby\given\bu)}\prod_{j=1}^m p(\ty_j\given u_j, \theta), \quad Z(\tby\given\bu) := \int_\Theta \prod_{j=1}^m p(\ty_j\given u_j, \theta)\pi_0(\dee\theta).
\]
This approximation is made possible by solving an optimization problem that minimizes a divergence measure $D$ between the two posterior distributions~\footnote{ In principle, we should learn the (pseudo)labels $\tby$ as well, but for classification problem, we can simply fix it as a constant set containing equal proportion of all possible classes. We assume this throughout the paper.}
\[
\bu^* = \argmin_\bu\,\, D(\pi_\bx, \pi_\bu).
\]
In a recent paper~\citep{kim2022pseudocoresets}, three variants of Bayesian pseudocoresets were proposed using different divergence measures, namely reverse Kullback-Leibler divergence, Wasserstein distance, and forward Kullback-Leibler divergence. However, performing both the approximation and the optimization in the parameter space can be computationally challenging, particularly for high-dimensional models such as deep neural networks.

\subsection{Bayesian pseudocoresets in weight-space}
\citet{kim2022pseudocoresets} advocates using forward KL divergence as the divergence measure when constructing Bayesian pseudocoresets, with the aim of achieving a more even exploration of the posterior distribution of the full dataset when performing uncertainty quantification with the learned pseudocoreset. The forward KL objective is computed as,
\[
\KL[\pi_\bx\Vert \pi_\bu] &= \log Z(\tby\given\bu) - \log Z(\by\given\bx) \\
& \quad + \bbE_{\pi_\bx}\bigg[\sum_{i=1}^n \log p(y_i\given x_i,\theta)\bigg] - \bbE_{\pi_\bx}\bigg[\sum_{j=1}^m \log p(\ty_j\given u_j,\theta)\bigg].
\]
The derivative of the divergence with respect to the pseudocoreset $\bu$ is computed as
\[
\nabla_\bu \KL[\pi_\bx\Vert \pi_\bu] = \bbE_{\pi_\bu}\bigg[\nabla_\bu\sum_{j=1}^m \log p(\ty_j\given u_j,\theta)\bigg]-\nabla_\bu \bbE_{\pi_\bx}\bigg[\sum_{j=1}^m \log p(\ty_j|u_j,\theta))\bigg]
\]
For the gradient, we need the expected gradients of the log posteriors that require sampling from the posteriors $\pi_\bx$ and $\pi_\bu$. Most of the probabilistic models do not admit simple closed-form expressions for these posteriors, and it is not easy to simulate those posteriors for high-dimensional models. To address this, \citet{kim2022pseudocoresets} proposes to use a Gaussian variational distributions $q_\bu(\theta)$ and $q_\bx(\theta) $ to approximate $\pi_\bx$ and $\pi_\bu$ whose means are set to the parameters obtained from the SGD trajectories,
\[
q_\bu(\theta) = \calN(\theta; \mu_\bu, \Sigma_\bu), \quad q_\bx(\theta) = \calN(\theta; \mu_\bx, \Sigma_\bx),
\]
where $\mu_\bu$ and $\mu_\bx$ are the maximum a posteriori (MAP) solutions computed for the dataset $\bu$ and $\bx$, respectively. $\Sigma_\bu$ and $\Sigma_\bx$ are covariances. The gradient, with the stop gradient applied to $\mu_\bu$, is approximated as,
\[
\label{eq:bpc-fkl}
\frac{\nabla_\bu}{S}\sum_{s=1}^S\left( \sum_{j=1}^m \log p\Big(\ty_j\given u_j,\texttt{sg}(\mu_\bu)+\Sigma_\bu^{1/2}\varepsilon_{\bu}^{(s)}\Big) -\sum_{j=1}^m \log p\Big(\ty_j\given u_j, \mu_\bx+\Sigma_\bx^{1/2}\varepsilon_{\bx}^{(s)}\Big)\right).
\]
Here, $\varepsilon_{\bu}^{(s)}$ and $\varepsilon_{\bx}^{(s)}$ are i.i.d. standard Gaussian noises and $S$ is the number of Monte-Carlo samples.


\paragraph{Expert trajectories} 
Approximating the full data and coreset posteriors with variational distributions as specified above requires $\mu_\bu$ and $\mu_\bx$ as consequences of running optimization algorithms untill convergence. While this may be feasible for small datasets, for large-scale setting of our interest, obtaining $\mu_\bu$ and $\mu_\bx$ from scratch at each iteration for updating $\bu$ can be time-consuming. To alleviate this, in the dataset distillation literature, \citet{cazenavette2022tm} proposed to use the \emph{expert trajectories}, the set of pretrained optimization trajectories constructed in advance to the coreset learning. \citet{kim2022pseudocoresets} brought this idea to Bayesian pseudocoresets, where a pool of pretrained trajectories are assume to be given before pseudocoreset learning. At each step of pseudocoreset update, a checkpoint $\theta_0$ from an expert trajectory is randomly drawn from the pool, and then $\mu_\bu$ and $\mu_\bx$ are quickly constructed by taking few optimization steps from $\theta_0$.

\section{Function space Bayesian pseudocoreset}
\subsection{Function space Bayesian neural networks}
We follow the framework presented in \citet{rudner2020rethinking,rudner2022tractable} to define a Function-space Bayesian Neural Network (FBNN). Let $\pi_0(\theta)$ be a prior distribution on the parameter and $g_\theta: \calX \to \bbR^d$ be a neural network index by $\theta$. Let $h: \Theta \to (\calX \to \bbR^d)$ be a deterministic mapping from a parameter $\theta$ to a neural network $g_\theta$. Then a function-space prior is simply defined as a pushforward $\nu_0(f) = h_* \pi_0(f) := \pi_0(h^{-1}(f))$. The corresponding posterior is also defined as a pushforward $\nu_\bx(f) = h_*\pi_\bx(f)$ and so is the pseudocoreset posterior $\nu_\bu(f) = h_* \pi_\bu(f)$. 

\subsection{Learning function space Bayesian pseudocoresets}
\label{main:subsec:3.2}
Given the function space priors and posteriors, a Function space Bayesian PseudoCoreset (FBPC) is obtained by minimizing a divergence measure between the function space posteriors,
\[
\bu^* = \argmin_\bu \,\,D(\nu_\bx, \nu_\bu).
\]
We follow \citet{kim2022pseudocoresets} suggesting to use the forward KL divergence, so our goal is to solve
\[
\bu^* = \argmin_\bu \,\,\KL[\nu_\bx\Vert\nu_\bu].
\]
The following proposition provides an expression for the gradient to minimize the divergence, whose proof is given \cref{app:sec:proof}.

\begin{restatable}{prop}{fbpcgrad}
The gradient of the forward KL divergence with respect to the coreset $\bu$ is 
\[
 \nabla_\bu\KL[\nu_\bx\Vert\nu_\bu] = -\nabla_\bu \bbE_{[\nu_\bx]_\bu}[\log p(\tby\given \bof_\bu)] 
+ \bbE_{[\nu_\bu]_\bu}[\nabla_\bu\log p(\tby\given \bof_\bu)],
\label{eq:fbpcgrad}
\]
where $[\nu_\bx]_\bu$ and $[\nu_\bu]_\bu$ are finite-dimensional distributions of the stochastic processes $\nu_\bx$ and $\nu_\bu$, respectively, $\bof_\bu := (f(u_j))_{j=1}^m$, and $p(\tby\given \bof_\bu) = \prod_{j=1}^m p(\ty_j\given f(u_j))$.
\end{restatable}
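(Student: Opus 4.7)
The plan is to split $\KL[\nu_\bx\Vert\nu_\bu]$ into a $\bu$-independent term and two $\bu$-dependent terms via Radon--Nikodym derivatives against the function-space prior $\nu_0$, and then recognize each gradient piece through a standard score-function argument, using that the likelihood depends on $f$ only through its values $\bof_\bu$.

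First I would compute the density of $\nu_\bu$ against $\nu_0$. By construction $d\pi_\bu/d\pi_0(\theta) = p(\tby\given \bof_\bu)/Z(\tby\given\bu)$, and this density depends on $\theta$ only through the evaluations $g_\theta(u_j) = f(u_j)$. Pushing forward along $h$ therefore yields $d\nu_\bu/d\nu_0(f) = p(\tby\given\bof_\bu)/Z(\tby\given\bu)$. Writing
\[
\KL[\nu_\bx\Vert\nu_\bu] = \bbE_{\nu_\bx}\!\left[\log\frac{d\nu_\bx}{d\nu_0}(f)\right] - \bbE_{\nu_\bx}[\log p(\tby\given\bof_\bu)] + \log Z(\tby\given\bu),
\]
the first term is $\bu$-independent and drops out of $\nabla_\bu$.

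For the middle term, since the integrand depends on $f$ only through $\bof_\bu$, the expectation over the full process $\nu_\bx$ collapses to an expectation over the finite-dimensional marginal $[\nu_\bx]_\bu$ at the coreset points, giving the first summand of the proposition. For $\log Z(\tby\given\bu)$, I would use the standard score-function identity: writing $Z(\tby\given\bu) = \int \prod_j p(\ty_j\given g_\theta(u_j))\,\pi_0(\dee\theta)$, differentiating under the integral, dividing by $Z$, and recognizing $p(\tby\given\bof_\bu)\pi_0/Z$ as the density of $\pi_\bu$, yields
\[
\nabla_\bu \log Z(\tby\given\bu) = \bbE_{\pi_\bu}[\nabla_\bu\log p(\tby\given\bof_\bu)] = \bbE_{[\nu_\bu]_\bu}[\nabla_\bu\log p(\tby\given\bof_\bu)],
\]
the last equality by the pushforward together with the integrand's dependence only on $\bof_\bu$. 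Combining the two contributions produces the stated formula.

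The main obstacle is the bookkeeping in infinite dimensions: justifying that the pushforward of $\pi_\bu$ under $h$ admits density $p(\tby\given\bof_\bu)/Z(\tby\given\bu)$ against $\nu_0$, and that any functional depending on $f$ only through evaluations at finitely many points integrates identically against the process and against its finite-dimensional marginal. Both are direct consequences of the pushforward definition and the tower property, but should be stated carefully. The remaining interchange of $\nabla_\bu$ with the integral defining $Z$ is routine by dominated convergence under standard smoothness assumptions on $p(\ty\given\,\cdot\,)$ in its second argument.
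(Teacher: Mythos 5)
Your proposal is correct and follows essentially the same route as the paper's proof: decomposing the KL via Radon--Nikodym derivatives against the function-space prior $\nu_0$, reducing expectations to the finite-dimensional marginals at $\bu$ because the likelihood depends on $f$ only through $\bof_\bu$, and handling the marginal-likelihood term $\log Z(\tby\given\bu) = \log p(\tby\given\bu)$ with the score-function identity. The only cosmetic difference is that you carry out the score-function computation in weight space and then push forward, whereas the paper works directly with $\nu_0$ and $\nu_\bu$ in function space.
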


To evaluate the gradient \cref{eq:fbpcgrad}, we should identify the finite-dimensional functional posterior distributions $[\nu_\bx]_\bu$ and $[\nu_\bu]_\bu$. While this is generally intractable, as proposed in \citet{rudner2020rethinking,rudner2022tractable},
we can instead consider a linearized approximation of the neural network $g_\theta$,
\[
\tilde{g}_\theta(\cdot) = g_{\mu_\bx}(\cdot) + \calJ_{\mu_\bx}(\cdot)(\theta - \mu_\bx),
\]
where $\mu_\bx = \bbE_{\pi_\bx}[\theta]$ and $\calJ_{\mu_\bx}(\cdot)$ is the Jacobian of $g_\theta$ evaluated at $\mu_\bx$.
Then we approximate the function space posterior $\nu_\bx$ with $\tilde\nu_\bx := \tilde h_*\pi_\bx$ where $\tilde h(\theta) = \tilde{g}_\theta$, and as shown in \citet{rudner2020rethinking,rudner2022tractable}, the finite dimensional distribution $[\tilde\nu_\bx]_\bu$ is a multivariate Gaussian distribution,
\[
[\tilde\nu_\bx]_\bu(\bof_\bu) = \calN\Big(\bof_\bu \given g_{\mu_\bx}(\bu), \calJ_{\mu_\bx}(\bu) \Sigma_\bx \calJ_{\mu_\bx}(\bu)\tr\Big),
\]
with $\Sigma_\bx = \mathrm{Cov}_{\pi_\bx}(\theta)$. Similarly, we obtain 
\[
[\tilde\nu_\bu]_\bu(\bof_\bu) = \calN\Big(\bof_\bu \given g_{\mu_\bu}(\bu), \calJ_{\mu_\bu}(\bu) \Sigma_\bu \calJ_{\mu_\bu}(\bu)\tr\Big),
\]
with $\mu_\bu := \bbE_{\pi_\bu}[\theta]$ and $\Sigma_\bu := \mathrm{Cov}_{\pi_\bu}(\theta)$. Using these linearized finite-dimensional distribution, we can approximate
\[
 \nabla_\bu\KL[\nu_\bx\Vert\nu_\bu] = -\nabla_\bu \bbE_{[\tilde\nu_\bx]_\bu}[\log p(\tby\given \bof_\bu)] 
+ \bbE_{[\tilde\nu_\bu]_\bu}[\nabla_\bu\log p(\tby\given \bof_\bu)],
\label{eq:linfbpcgrad}
\]

\subsection{Tractable approximation to the gradient}
Even with the linearization, evaluating \cref{eq:linfbpcgrad} is still challenging because it requires obtaining $\mu_\bx$ and $\Sigma_\bx$ which are the statistics of the weight-space posterior $\pi_\bx$. \citet{rudner2022tractable} proposes to learn a variational approximation $q_\bx(\theta)$ in the \emph{weight-space}, and use the linearized pushforward of the variational distribution $\tilde{h}_* q_\bx$ as a proxy to the function space posterior. Still, this approach requires computing the heavy Jacobian matrix $\calJ_{\bbE_{q_\bx}[\theta]}(\bu)$, so may not be feasible for our scenario where we have to compute such variational approximations \emph{at each} update of the pseudocoreset $\bu$.

Instead, we choose to directly construct a variational approximations to the finite-dimensional distributions of the function space posteriors, that is,
\[
&[\tilde\nu_\bx]_\bu(\bof_\bu) \approx q_\bx(\bof_\bu) = \calN(\bof_\bu\given g_{\hat\mu_\bx}(\bu), \hat{\Psi}_\bx),\\
&[\tilde\nu_\bu]_\bu(\bof_\bu) \approx q_\bu(\bof_\bu) = \calN(\bof_\bu\given g_{\hat\mu_\bu}(\bu), \hat{\Psi}_\bu),\\
\]
where $(\hat\mu_\bx, \hat\Psi_\bx)$ and $(\hat\mu_\bu, \hat\Psi_\bu)$ are variational parameters for the full data and coreset posteriors. Inspired by \citet{kim2022pseudocoresets}, we construct the variational parameters using expert trajectories. Unlike \citep{kim2022pseudocoresets}, we simply let the MAP solution computed for $\bx$, $\theta_\bx$, by sampling a checkpoint from the later part of the expert trajectories, and obtain the MAP solution of $\bu$, $\theta_\bu$, by directly optimizing an initial random parameter. Then we obtain $\hat\mu_\bx$ and $\hat\mu_\bu$ using $\bu$. For the covariance matricies $\hat\Psi_\bx$ and $\hat\Psi_\bu$, while \citet{kim2022pseudocoresets} proposed to use spherical Gaussian noises, we instead set them as an empirical covariance matrices of the samples collected from the optimization trajectory. Specifically, we take additional $K$ steps from each MAP solution to compute the empirical covariance.
\[
\label{eq:mu_and_Psi}
&\theta^{(0)}_{\bx} = \theta_\bx, \quad \theta_{\bx}^{(t)} = \texttt{opt}(\theta_{\bx}^{(t-1)}, (\bx, \by)), \quad \hat\mu_\bx = g_{\red{\texttt{sg}({\theta_{\bx}^{(0)}})}}(\bu), \\
& \hat\Psi_\bx := \red{\texttt{sg}\Bigg(\mathrm{diag}\Bigg( \frac{1}{K}\sum_{k=1}^K g^2_{\theta_{\bx}^{(k)}}(\bu) - \bigg(\frac{1}{K}\sum_{k=1}^K g_{\theta_{\bx}^{(k)}}(\bu)\bigg)^2\Bigg)\Bigg)},
\]
where $\texttt{opt}(\theta, \bx)$ is a step of SGD optimization applied to $\theta$ with data $\bx$ and the squares in the $\mathrm{diag}(\cdot)$ are applied in element-wise manner. Note also that we are applying the \red{stop-gradient} operations for to block the gradient flow that might lead to complication in the backpropagation procedure. The variational parameters $(\hat\mu_\bu, \hat\Psi_\bu)$ are constructed in a similar fashion, but using the psedocoreset $(\bu, \tby)$ instead of the original data $(\bx, \by)$. It is noteworthy that our approach is similar to one of the methods in Bayesian learning, SWAG~\citep{maddox2019simple}. However, while SWAG focuses on collecting statistics on weight space trajectories, our method constructs statistics in function spaces. This distinction makes our approach more suitable and scalable for pseudocoreset construction. The overview of proposed method is provided in \cref{fig:concept_figure}.

\begin{figure}
    \centering
    \includegraphics[width=\textwidth]{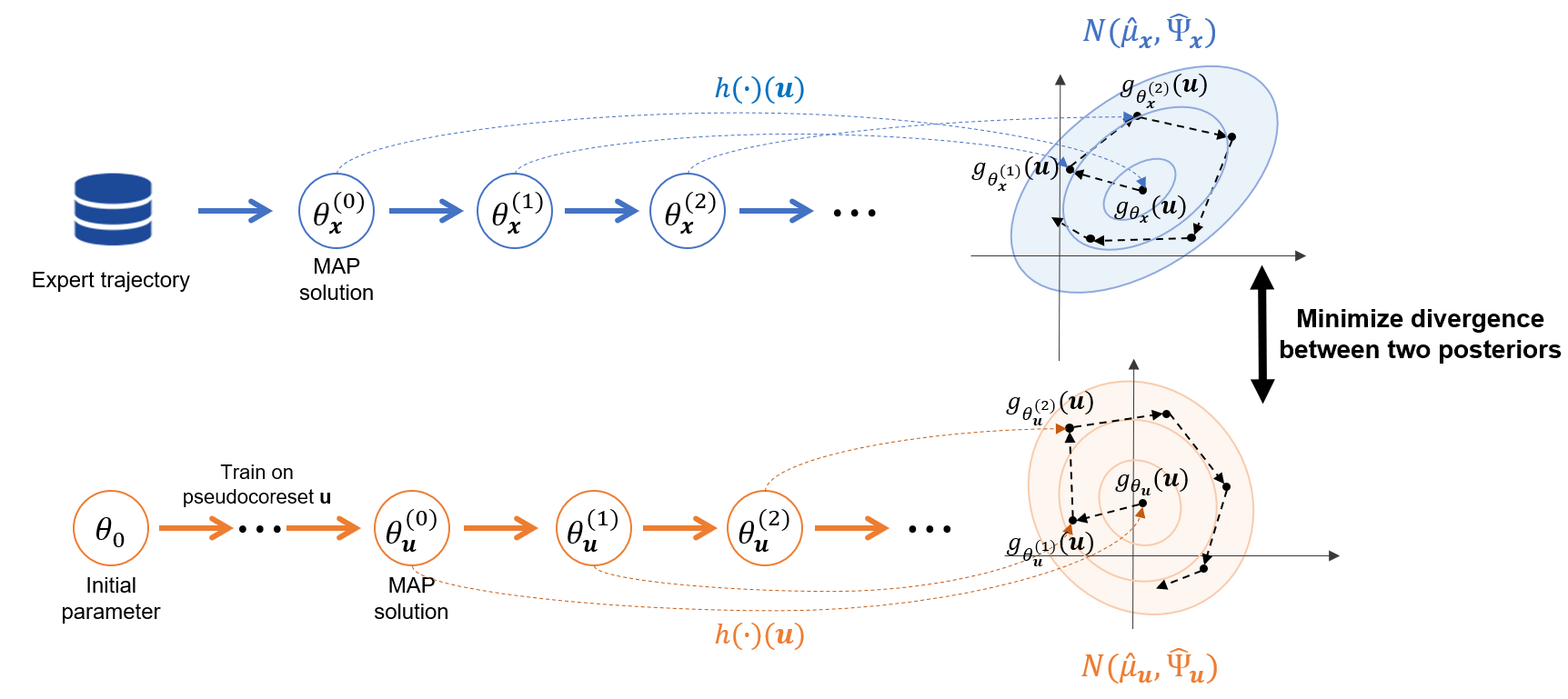}
    \caption{The conceptual overview of our proposed training procedure.}
    \label{fig:concept_figure}
\end{figure}

With the variational approximations constructed as described, we obtain a Monte-Carlo estimator of \cref{eq:linfbpcgrad},
\[\label{eq:fbpc_mc_grad}
\lefteqn{\nabla_\bu\KL[\nu_\bx\vert\nu_\bu] \approx - \nabla_\bu \bbE_{q_\bx(\bof_\bu)}[ \log p(\tby\given \bof_\bu) ] + 
\bbE_{q_\bu(\bof_\bu)}\Big[\nabla_\bu \log p(\tby\given\bof_\bu)\Big]} \\
&= - \nabla_\bu \bbE_{p(\varepsilon_\bx)}[\log p(\tby\given \hat\mu_\bx + \hat\Psi_\bx^{1/2}\varepsilon_\bx)]
+ \bbE_{p(\varepsilon_\bu)}\Big[ \nabla_\bu \log p(\tby\given \hat\mu_\bu + \hat\Psi_\bu^{1/2}\varepsilon_\bu)\Big] \\
& \approx \frac{1}{S}\sum_{s=1}^S \bigg( - \nabla_\bu \log p\Big(\tby\given \hat\mu_\bx + \hat\Psi_\bx^{1/2}\varepsilon_\bx^{(s)}\Big) 
+ \nabla_\bu \log p\Big(\tby\given\hat\mu_\bu + \hat\Psi_\bu^{1/2}\varepsilon_\bu^{(s)}\Big)\bigg),
\]
where $p(\varepsilon_\bx)$ and $p(\varepsilon_\bu)$ are standard Gaussians, $(\varepsilon_\bx^{(s)})_{s=1}^S$ and $(\varepsilon_\bu^{(s)})_{s=1}^S$ are i.i.d. samples from them.

\subsection{Multiple architectures FBPC training}
One significant advantage of function space posterior matching is that the function is typically of much lower dimension compared to the weight. This makes it more likely for function spaces to exhibit similar posterior shapes in the vicinity of the MAP solutions. This characteristic of function space encourages the exploration of function space pseudocoreset training in the context of multiple architectures. Because, the task of training a coreset that matches the highly complex weight space posterior across multiple architectures is indeed challenging, while the situation becomes relatively easier when dealing with architectures that exhibit similar function posteriors. 

Therefore we propose a novel multi-architecture FBPC algorithm in \cref{alg:fbpc}. The training procedure involves calculating the FBPC losses for each individual architecture separately and then summing them together to update. This approach allows us to efficiently update the pseudocoreset by considering the contributions of each architecture simultaneously. We will empirically demonstrate that this methodology significantly enhances the architecture generalization ability of pseudocoresets in \cref{main:sec:experiments}.

\begin{algorithm}[t]
\small
\begin{algorithmic}
\caption{Multi-architecture Function space Bayesian Pseudocoreset} 
\label{alg:fbpc}
\Require Set of architectures $\calA$, expert trajectories $\{\calE^{(a)}:a\in\calA\}$, prior distributions of parameters $\{\pi_0^{(a)}:a\in\calA\}$, an optimizer \texttt{opt}.
\State Initialize $\bu$ with random minibatch of coreset size $m$.
\For{$i=1,\dots,N$}
\State Initialize the gradient of pseudocoreset, $\bg\leftarrow0$.
\For{$a\in\calA$}
    \State Sample the MAP solution computed for $\bx$, $\theta_\bx\in\calE^{(a)}$.
    \State Sample an initial random parameter $\theta_0 \sim \pi_0^{(a)}(\theta)$.
    \Repeat
        \State $\theta_t\leftarrow \texttt{opt}(\theta_{t-1}, (\bu, \tilde\by))$
    \Until{converges to obtain the MAP solution computed for $\bu$, $\theta_\bu$}.
    \State Obtain $\hat\mu_\bx$, $\hat\mu_\bu$, $\hat\Psi_\bx$, $\hat\Psi_\bu$ by \cref{eq:mu_and_Psi}.
    \State Compute the pseudocoreset gradient $\bg^{(a)}$ using \cref{eq:fbpc_mc_grad}.
    \State $\bg\leftarrow \bg+\bg^{(a)}$.
\EndFor
\State Update the pseudocoreset $\bu$ by using the gradient $\bg$.
\EndFor

\end{algorithmic}
\end{algorithm}
\subsection{Compare to weight space Bayesian pseudocoresets}
By working directly on the function space, our method could bypass several challenges that may arise when working on a weight space. Indeed, a legitimate concern arises regarding multi-modality, as the posterior distributions of deep neural networks are highly complex. It makes the optimization of pseudocoresets on weight space difficult. Moreover, minimization of weight space divergence does not necessarily guarantee proximity in the function space. Consequently, although we try to minimize the weight space divergence, there is a possibility that the obtained function posterior may significantly deviate from the true posterior. However, if we directly minimize the divergence between the function distributions, we can effectively address this issue. 

On the other hand, there is an additional concern related to memory limitations. While it has been demonstrated in \citet{kim2022pseudocoresets} that the memory usage of Bayesian pseudocoresets employing forward KL divergence is not excessively high, we can see that \cref{eq:bpc-fkl} requires Monte-Carlo samples of weights, which requires $\calO(Sp)$ where $S$ and $p$ represent the number of Monte-Carlo samples and the dimensionality of the weights, respectively. This dependence on Monte-Carlo sampling poses a limitation for large-scale networks when memory resources are constrained. In contrast, our proposed method requires significantly less memory, $\calO(Sd)$ where $d$ represents the dimensionality of the functions. Indeed, all the results presented in this paper were obtained using a single NVIDIA RTX-3090 GPU with 24GB VRAM.
\section{Related work}
\paragraph{Bayesian coresets}
Bayesian Coreset~\citep{campbell2019sparsevi, trevor2018giga, Trevor2019hilbert, Huggins2016coresets} is a field of research aimed at addressing the computational challenges of MCMC and VI on large datasets in terms of time and space complexity~\citep{chen2014sghmc, baker2017control, Quiroz_2018}. It aims to approximate the energy function of the entire dataset using a weighted sum of a small subset. However for high-dimensional data, \citet{manousakas2020bayesian} demonstrates that considering only subsets as Bayesian coreset is not sufficient, as the KL divergence between the approximated coreset posterior and the true posterior increases with the data dimension, and they proposed Bayesian pseudocoresets. There are recent works on constructing pseudocoreset variational posterior to be more flexible~\citep{chen2022sparseham} or how to effectively optimize the divergences between posteriors~\citep{kim2022pseudocoresets, chen2022sparseham, manousakas2023blackbox, naik2023fast}. However, there is still a limitation in constructing high-dimensional Bayesian pseudocoresets specifically for deep neural networks.

\paragraph{Dataset distillation}
Dataset distillation also aims to synthesize the compact datasets that capture the essence of the original dataset. However, the dataset distillation places particulary on optimizing the test performance of the distilled dataset. Consequently, the primary objective in dataset distillation is to maximize the performance of models trained using the distilled dataset, and researchers provides how to effectively solve this bi-level optimization~\citep{wang2018dataset, nguyen2020kipinfinite, nguyen2020kip, zhou2022frepo, zhao2020dcsiamese, zhao2020dcgm, cazenavette2022tm}. In recent work, \citet{kim2022pseudocoresets} established a link between specific dataset distillation methods and optimizing certain divergence measures associated with Bayesian pseudocoresets. 

\paragraph{Function space variational inference}

Although Bayesian neural networks exhibit strong capabilities in performing variational inference, defining meaningful priors or efficiently inferring the posterior on weight space is still challenging due to their over-parametrization. To overcome this issue, researchers have increasingly focused on function space variational inference~\citep{carvalho2020scalable, burt2020understanding, maandhern2021,rudner2022continual, pan2020continual, titsias2020functional, rodríguez2022function, ma2019variational}. For instance, \citet{sun2019functional} introduced a framework that formulates the KL divergence between functions as the supremum of marginal KL divergences over finite sets of inputs. \citet{wang2018function} utilizes particle-based optimization directly in the function space. Furthermore, \citet{rudner2022tractable}  recently proposed a scalable method for function space variational inference on deep neural networks.
\section{Experiments}
\label{main:sec:experiments}
\subsection{Experimental Setup}
In our study, we employed the CIFAR10, CIFAR100 and Tiny-ImageNet datasets to create Bayesian pseudocoresets of coreset size $m\in\{1, 10, 50\}$ images per class (ipc). These pseudocoresets were then evalutated by conducting the Stochastic Gradient Hamiltonian Monte Carlo (SGHMC)~\citep{chen2014sghmc} algorithm on those pseudocoresets. We measured the top-1 accuracy and negative log-likelihood of the SGHMC algorithm on the respective test datasets. Following the experimental setup of previous works~\citep{kim2022pseudocoresets, cazenavette2022tm, zhao2020dcgm}, we use the differentiable siamese augmentation~\citep{zhao2021dataset}. For the network architectures, we use 3-layer ConvNet for CIFAR10 and CIFAR100, and 4-layer ConvNet for Tiny-ImageNet. 

We employed three baseline methods to compare the effectiveness of function space Bayesian pseudocoresets. The first baseline is the random coresets, which involves selecting a random mini-batch of the coreset size. The others two baseline methods, BPC-rKL~\citep{kim2022pseudocoresets, manousakas2020bayesian} and BPC-fKL~\citep{kim2022pseudocoresets}, are Bayesian pseudocoresets on weight space. BPC-rKL and BPC-fKL employ reverse KL divergence and forward KL divergence as the divergence measures for their training, respectively.

\subsection{Main Results}
\cref{tab:main_table} and \cref{tab:cifar100} show the results of each baseline and our method for each dataset. For BPC-rKL and BPC-fKL, we used the official code from \citep{kim2022pseudocoresets} for training the pseudocoresets, and only difference is that we used our own SGHMC hyperparameters during evaluation. For detailed experiment setting, please refer to \cref{app:sec:Experimetal}. 

As discussed earlier, we utilized the empirical covariance to variational posterior instead of using na\"ive isotropic Gaussian for the function space variational posterior. To assess the effectiveness of using sample covariance, we compare these two in \cref{tab:main_table}, as we also presented the results for FBPC-isotropic, which represents the FBPC trained with a unit covariance Gaussian posteriors. The results clearly demonstrate that using sample covariance captures valuable information from the posterior distribution, resulting in improved performance. 
Overall, the results presented in \cref{tab:main_table} and \cref{tab:cifar100} demonstrate that our method also outperforms the baseline approaches, including random coresets, BPC-rKL and BPC-fKL, in terms of both accuracy and negative log-likelihood, especially on the large-scale datasets in \cref{tab:cifar100}. 

Furthermore, the Bayesian pseudocoreset can be leveraged to enhance robustness against distributional shifts when combined with Bayesian model averaging. To assess the robustness of our function space Bayesian pseudocoresets on out-of-distribution inputs, we also conducted experiments using the CIFAR10-C dataset~\citep{hendrycks2019robustness}. This dataset involves the insertion of image corruptions into the CIFAR10 images. By evaluating the performance of the pseudocoresets on CIFAR10-C, we can see the model's ability to handle input data that deviates from the original distribution. In \cref{tab:cifar10-c}, we provide the results for top-1 accuracy and degradation scores, which indicate the extent to which accuracy is reduced compared to the in-distribution's test accuracy. The result demonstrates that our FBPC consistently outperforms the weight space Bayesian pseudocoreset, BPC-fKL.

\begin{table}[]
\centering
\caption{Averaged test accuracy and negative log-likelihoods of models trained on each Bayesian pseudocoreset from scratch using SGHMC on the CIFAR10 dataset. Bold is the best and underline is the second best. These values are averaged over 5 random seeds.}
\label{tab:main_table}
{\small
\resizebox{0.95\textwidth}{!}{%
\begin{tabular}{cc|ccc|cc}
\toprule
ipc                 & SGHMC              & Random     & BPC-rKL~\citep{kim2022pseudocoresets, manousakas2020bayesian}    & BPC-fKL~\citep{kim2022pseudocoresets} &FBPC-isotropic (Ours)  & FBPC (Ours)         \\ \midrule
\multirow{2}{*}{1}  & Acc ($\uparrow$)   & 16.30$\spm{0.74}$ & 20.44$\spm{1.06}$ & \underline{34.50}$\spm{1.62}$ &32.00$\spm{0.75}$ & \textbf{35.45$\spm{0.31}$} \\
                    & NLL ($\downarrow$) & 4.66$\spm{0.03}$  & 4.51$\spm{0.10}$  & 3.86$\spm{0.13}$  &\textbf{3.40$\spm{0.27}$} & \underline{3.79}$\spm{0.04}$  \\ \midrule
\multirow{2}{*}{10} & Acc ($\uparrow$)   & 32.48$\spm{0.34}$ & 37.92$\spm{0.66}$ & 56.19$\spm{0.61}$ &\underline{61.43}$\spm{0.35}$ & \textbf{62.33$\spm{0.34}$} \\
                    & NLL ($\downarrow$) & 2.98$\spm{0.03}$  & 2.47$\spm{0.04}$  & 1.48$\spm{0.02}$ &\underline{1.35}$\spm{0.02}$ & \textbf{1.31$\spm{0.02}$}  \\ \midrule
\multirow{2}{*}{50} & Acc ($\uparrow$)   & 49.68$\spm{0.46}$ & 51.86$\spm{0.38}$ & 64.74$\spm{0.32}$ & \textbf{71.33$\spm{0.19}$}& \underline{71.23}$\spm{0.17}$ \\
                    & NLL ($\downarrow$) & 2.06$\spm{0.02}$  & 1.95$\spm{0.02}$  & \underline{1.26}$\spm{0.01}$ &\textbf{1.03$\spm{0.01}$} & \textbf{1.03}$\spm{0.05}$  \\ \bottomrule
\end{tabular}%
}
}
\end{table}
\begin{table}
\small
\centering
\caption{Averaged test accuracy and negative log-likelihoods of models trained on each Bayesian pseudocoreset from scratch using SGHMC on the CIFAR100 and Tiny-ImageNet datasets. These values are averaged over 3 random seeds.}
\label{tab:cifar100}
\resizebox{0.95\textwidth}{!}{%
\begin{tabular}{cc|ccc|ccc}
\toprule
                             &     & \multicolumn{3}{c|}{CIFAR100}     & \multicolumn{3}{c}{Tiny-ImageNet} \\
                             & ipc & 1         & 10        & 50        & 1            & 10          & 50   \\ \midrule
\multirow{2}{*}{Random}      & Acc ($\uparrow$) & 4.82$\spm{0.47}$ & 18.0$\spm{0.31}$ & 35.1$\spm{0.23}$ & 1.90$\spm{0.08}$    & 7.21$\spm{0.04}$   & 19.15$\spm{0.12}$     \\
                             & NLL ($\downarrow$) & 5.55$\spm{0.07}$ & 4.57$\spm{0.01}$ & 3.35$\spm{0.01}$ & 6.18$\spm{0.04}$    & 5.77$\spm{0.02}$   & 4.88$\spm{0.01}$     \\
\multirow{2}{*}{BPC-fKL}     & Acc ($\uparrow$) & 14.7$\spm{0.16}$ & 28.1$\spm{0.60}$ & 37.1$\spm{0.33}$ & 3.98$\spm{0.13}$    & 11.4$\spm{0.45}$   &  22.18$\spm{0.32}$    \\
                             & NLL ($\downarrow$) & 4.17$\spm{0.05}$ & 3.53$\spm{0.05}$ & 3.28$\spm{0.24}$ & 5.63$\spm{0.03}$    & 5.08$\spm{0.05}$   &  4.65$\spm{0.02}$    \\ \midrule
\multirow{2}{*}{FBPC (Ours)} & Acc ($\uparrow$) & \textbf{21.0$\spm{0.76}$} & \textbf{39.7$\spm{0.31}$} & \textbf{44.47$\spm{0.35}$}  &  \textbf{10.14$\spm{0.68}$} & \textbf{19.42$\spm{0.51}$} & \textbf{26.43$\spm{0.31}$}  \\
                             & NLL ($\downarrow$) & \textbf{3.76$\spm{0.11}$} & \textbf{2.67$\spm{0.02}$} &\textbf{2.63$\spm{0.01} $}  &  \textbf{4.69$\spm{0.05}$}  & \textbf{4.14$\spm{0.02}$} & \textbf{4.30$\spm{0.05}$} \\ \bottomrule
\end{tabular}%
}
\end{table}

\subsection{Architecture generalization}
In this section, we aim to demonstrate the architecture generalizability of FBPC and emphasize the utility of multi-architecture training as we discussed in the previous section. We specifically focus on investigating the impact of varying normalization layers on the generalizability of the pseudocoreset, since it is widely recognized that a pseudocoreset trained using one architecture may struggle to generalize effectively to a model that employs different normalization layers. We have also included the results of cross-architecture experiments that involve changing the architecture itself in \cref{app:cross-architecture}.

To assess this, we compare a single architecture trained pseudocoreset and a multiple architecture trained pseudocoreset. For single architecture training, we initially train a pseudocoreset using one architecture with a specific normalization layer, for instance we use instance normalization. Subsequently, we evaluate the performance of this pseudocoreset on four different types of normalization layers: instance normalization, group normalization, layer normalization, and batch normalization. For multiple architecture training, we aggregate four losses for single architecture training of each architecture, and train the pseudocoreset with the sum of all losses, as mentioned in previous section.

As depicted in \cref{fig:normalization_acc}, we observe that both WBPC (Weight space Bayesian pseudocoresets) and FBPC-single (Function space Bayesian pseudocoresets trained on a single architecture) exhibit a notable trend, that they tend to not perform well when evaluated on the architecture that incorporates different normalizations, regardless of whether it is trained on weight space or function space. On the other hand, when trained with multiple architectures, both WBPC-multi and FBPC-multi perform well across the all architectures, while notably FBPC-multi significantly outperforms WBPC-multi.

\begin{figure}[t]
\vspace{-0.1in}
    \centering
    \begin{subfigure}{0.48\textwidth}
        \centering
        \includegraphics[width=\textwidth]{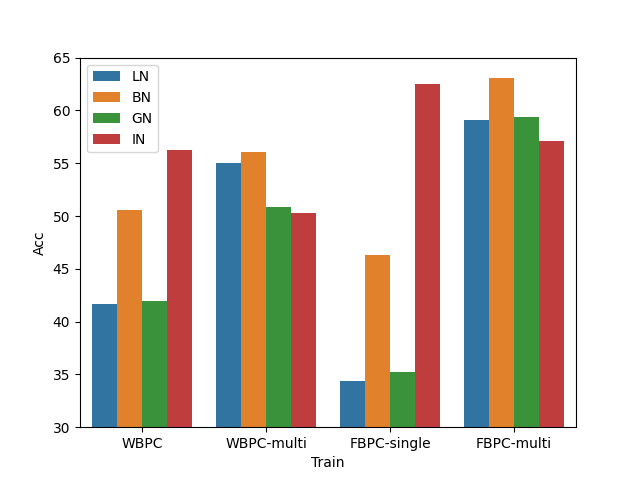}
        \caption{Performance evaluation with different normalization layers. Color represents the test architectures. The multiple architecture FBPC training enhances the generalization ability to other architectures.}
        \label{fig:normalization_acc}
    \end{subfigure}
    \hfill
    \centering
    \begin{subfigure}{0.47\textwidth}
        \centering
        \includegraphics[width=\textwidth]{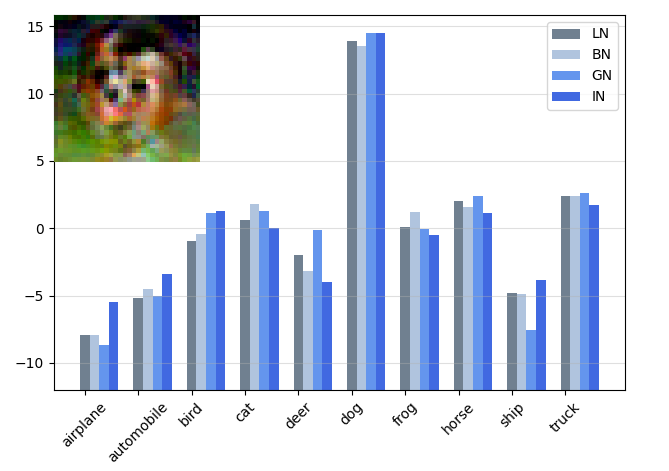}
        \caption{A sample image and its corresponding function values. Despite variations in the normalization layers of different architectures, the function values exhibit similarity across the architectures.}
        \label{fig:multi_bar.png}
    \end{subfigure}
\caption{Results for multiple architecture FBPC training.}
\end{figure}

As mentioned in the previous section, we hypothesize that the superior performance of FBPC compared to WBPC can be attributed to the likelihood of having similar function space posterior across architectures. To validate this, we conduct an examination of the logit values for each sample across different architectures. As an illustration, we provide an example pseudocoreset image belonging to the class label "dog" along with its corresponding logits for all four architectures. As \cref{fig:multi_bar.png} shows, it can be observed that the logits display a high degree of similarity, indicating a strong likelihood of matching function posterior distributions. Our analysis confirms that, despite architectural disparities, the function spaces generated by these architectures exhibit significant similarity and it contributes to superiority of FBPC in terms of architecture generalizability.

\begin{table}[]
\centering
\small
\caption{Test accuracy and degradation scores of models trained on each Bayesian pseudocoreset from scratch using SGHMC on the CIFAR10-C. Degradation refers to the extent by which a model's accuracy decreases when evaluated on the CIFAR10-C dataset compared to the CIFAR10 test dataset.}
\label{tab:cifar10-c}
\resizebox{\textwidth}{!}{%
\begin{tabular}{cc|cccccccccc|c}
\toprule
                         & corruption  & BN   & DB   & ET   & FT   & GB   & JPEG & MB   & PIX  & SN   & SP   & Avg. \\ \midrule
\multirow{2}{*}{BPC-fKL} & Acc ($\uparrow$)       & 33.5 & 34   & 35.9 & 25.1 & 33.7 & 39.1 & 32.7 & 38.3 & 28.9 & 41.2 & 34.3 \\
                         & Degradation ($\downarrow$) & 40.3 & 39.4 & 36   & 55.2 & 39.9 & 30.3 & 41.6 & 31.6 & 48.4 & 26.5 & 38.9 \\ \midrule
\multirow{2}{*}{FBPC}    & Acc ($\uparrow$)         & 48.9 & 46.4 & 47.6 & 41.7 & 44.0 & 52.0 & 44.3 & 51.0 & 47.1 & 52.3 & \textbf{47.5} \\
                         & Degradation ($\downarrow$) & 21.5 & 25.7 & 23.7 & 33.0 & 29.3 & 16.4 & 28.8 & 18.1 & 24.4 & 16.1 & \textbf{23.7} \\ \bottomrule
\end{tabular}%
}
\end{table}


\section{Conclusion}
In this paper, we explored the function space Bayesian pseudocoreset. 
We constructed it by minimizing forward KL divergence between the function posteriors of pseudocoreset and the entire dataset. To optimize the divergence, we proposed a novel method to effectively approximate the function posteriors with an efficient training procedure. Finally, we empirically demonstrated the superiority of our function space Bayesian pseudocoresets compared to weight space Bayesian pseudocoresets, in terms of test performance, uncertainty quatification, OOD robustness, and architectural robustness.
\paragraph{Limitation} Despite showing promising results on function space Bayesian pseudocoresets, there still exist a few limitations in the training procedure. Our posterior approximation strategy requires the MAP solutions, which necessitates training them prior to each update step or preparing expert trajectories. This can be time-consuming and requires additional memory to store the expert trajectories.
\paragraph{Societal Impacts} Our work is hardly likely to bring any negative societal impacts. 

\begin{ack}
This work was supported by KAIST-NAVER Hypercreative AI Center, Institute of Information \& communications Technology Planning \& Evaluation (IITP) grant funded by the Korea government (MSIT) (No.2019-0-00075, Artificial Intelligence Graduate School Program (KAIST), No.2022-0-00184, Development and Study of AI Technologies to Inexpensively Conform to Evolving Policy on Ethics, and No.2022-0-00713, Meta-learning Applicable to Real-world Problems) and the National Research Foundation of Korea (NRF) grant funded by the Korea government (MSIT) (No. 2022R1A5A708390812)
\end{ack}

\bibliography{references}

\clearpage
\newpage

\appendix
\section{Proofs}
\label{app:sec:proof}

\fbpcgrad*

\begin{proof}
We follow the arguments in \citet{degmatthews2016sparse,rudner2020rethinking}.
The forward KL divergence is defined as,
\[
\KL[\nu_\bx\Vert \nu_\bu] = \int \log \frac{\dee\nu_\bx}{\dee\nu_\bu}(f)\dee \nu_\bx(f).
\]
By the chain rule for the Radon-Nikodym derivative, we have
\[
\KL[\nu_\bx\Vert \nu_\bu]
= \int\log \frac{\dee\nu_\bx}{\dee\nu_0}(f) \dee \nu_\bx(f) - \int \log \frac{\dee\nu_\bu}{\dee\nu_0}(f)\dee\nu_\bx(f).
\]
The first term does not depend on $\bu$, so we investigate the second term. By the measure theoretic Bayes' rule, 
\[
\frac{\dee\nu_\bu}{\dee\nu_0}(f) = \frac{p(\tby\given f, \bu)}{p(\tby\given \bu)},
\]
where $p(\tby\given f, \bu) := \prod_{j=1}^m p(\ty_j\given u_j, f)$ and,
\[
p(\tby|\bu) = \int p(\tby|f, \bu) \dee \nu_0(f).
\]
Now let $\rho_{A}: (\calX \to \bbR^d) \to (A \to  \bbR^d)$ be a projection function that takes a function $f$ and returns its restriction on a set $A \subseteq \calX$. Assuming that the likelihood depends only on the finite index set $\bu$, we can write
\[
\frac{\dee\nu_\bu}{\dee\nu_0}(f) = \frac{\dee[\nu_\bu]_\bu}{\dee[\nu_0]_\bu}(\rho_\bu(f)) = \frac{p(\tby\given \bof_\bu)}{p(\tby\given \bu)},
\]
where $[\cdot]_\bu$ denotes the finite-dimensional distribution of stochastic process evalauted at $\bu$ and $\rho_\bu(f) := \bof_\bu := (f(u_j))_{j=1}^m$ are corresponding function values at $\bu$. Putting this back into the above equation,
\[
\int \log \frac{\dee\nu_\bu}{\dee\nu_0}(f) \dee\nu_\bx(f) &=
\int \log \frac{\dee[\nu_\bu]_\bu}{\dee[\nu_0]_\bu}(\bof_\bu) \dee[\nu_\bx]_\bu(\bof_\bu) \\
&= \int \log \frac{p(\tby\given \bof_\bu)}{p(\tby\given\bu)} \dee[\nu_\bx]_\bu (\bof_\bu) \\
&= \bbE_{[\nu_\bx]_\bu}[\log p(\tby\given \bof_\bu)] - \log p(\tby\given \bu).
\]
Now taking the gradient w.r.t. $\bu$, we get
\[
\nabla_\bu \KL[\nu_\bx\Vert\nu_\bu] = -\nabla_\bu \bbE_{[\nu_\bx]_\bu}[\log p(\tby\given \bof_\bu)] + \nabla_\bu \log p(\tby\given \bu).
\]
Note also that
\[
\nabla_\bu\log p(\tby\given \bu) &= \nabla_\bu \log \int p(\tby\given f, \bu)\dee\nu_0(f) \\
&= \int \frac{\nabla_\bu p(\tby\given f,\bu)}{p(\tby\given\bu)} \dee\nu_0(f) \\
&= \int \nabla_\bu \log p(\tby\given f, \bu) \frac{p(\tby\given f, \bu)}{p(\tby\given\bu)} \dee\nu_0(f) \\
&= \int \nabla_\bu \log p(\tby\given f, \bu) \frac{\dee\nu_\bu}{\dee\nu_0}(f) \dee\nu_0(f) \\
&= \int \nabla_\bu \log p(\tby\given f, \bu) \dee\nu_\bu(f) \\
&= \int \nabla_\bu \log p(\tby\given \bof_\bu) \dee[\nu_\bu]_\bu(f) = \bbE_{[\nu_\bu]_\bu}[\nabla_\bu\log p(\tby\given \bof_\bu)].
\]
As a result, we conclude that
\[
\nabla_\bu\KL[\nu_\bx\Vert\nu_\bu] = -\nabla_\bu \bbE_{[\nu_\bx]_\bu}[\log p(\tby\given \bof_\bu)] 
+ \bbE_{[\nu_\bu]_\bu}[\nabla_\bu\log p(\tby\given \bof_\bu)].
\]

\end{proof}
\section{Inducing points in Stochastic Variational Gaussian Processes}
\label{app:sec:svgp}
Stochastic Variational Gaussian Processes~\citep[SVGP;][]{hensman2013gaussian,hensman2015scalable} were introduced as a solution to address the significant computational complexity, characterized by a cubic computational complexity of $O(n^3)$ and memory cost of $O(n^2)$, associated with performing inference using Gaussian Processes. In this context, $n$ represents the total number of training data points. SVGP effectively leverages a concept called \textit{inducing points}, which serves to reduce the computational complexity to $O(m^3)$ and memory requirements to $O(m^2)$ during inference, while still providing a reliable approximation of the posterior distribution of\textbf{} the entire training dataset. Notably, $m$ denotes the number of inducing points, which is typically much smaller than the total number of training data points i.e. $m \ll n$. The above description clearly shows that the inducing points have a similar purpose to FBPC. However, there are some differences in their learning objectives. In the context of SVGP, the process of optimizing the inducing points denoted as $Z=\{z_1,\ldots, z_m\}$ involves maximizing the ELBO in order to make a variational Gaussian distribution $q(\bof_{\text{tr}},\bof_z)$ well approximate the posterior distribution $p(\bof_{\text{tr}},\bof_z|\by_{\text{tr}})$. This variational distribution is composed of two parts: 1) $p(\bof_{\text{tr}}|\bof_z)$, which represents the Gaussian posterior distribution, and 2) $q(\bof_z)$, which is the Gaussian variational distribution. During this optimization, we focus on training the inducing points $Z$ as well as the mean and variance of the variational distribution $q(\bof_z)$. The goal of this optimization is to create a good approximation of the posterior distribution $p(y_{\text{te}}|x_{\text{te}}, D_{\text{tr}})$ during the inference process, all while keeping the computational cost low. On the other hand, as outlined in \cref{main:subsec:3.2}, the formulation of FBPC involves directly minimizing the divergence between function space posterior, specifically $\KL[\nu_\bx\Vert\nu_\bu]$. To sum up, while they do share some similarities in that they introduce a set of learnable pseudo data points, they are fundamentally different in their learning objectives. SVGP is interested in approximating the full data posterior through the inducing points, while ours aims to make the pseudocoreset posterior as close as possible to the full data posterior. 
\section{Experimetal Details}
\label{app:sec:Experimetal}

The code for our experiments will be  available soon.

\subsection{Expert trajectory}
For expert trajectory, we trained the network with the entire dataset and saved their snapshot parameters at every epoch, following the setup described in \citep{cazenavette2022tm}. For training, we used an SGD optimizer with a learning rate of 0.01. We saved 100 training trajectories, with each trajectory consisting of 50 epochs.

\subsection{Hyperparmeter setting}
\paragraph{Training}
In our training procedure, we have some hyperparameters. Firstly, we sampled the MAP solution of $\bx$, denoted as $\theta_\bx$, from the later part of the expert trajectories. The decision of how many samples from the later part to utilize was treated as a hyperparameter for each experimental setting. We chose to use samples from $T$ epoch onwards as the MAP solution samples. 
When obtaining the MAP solution $\theta_\bu$, there are also several hyperparameters involved, the optimizer and convergence criteria for training the MAP solution from random parameters. We used an Adam optimizer with a learning rate of 0.001 to train the model until the training loss reached a threshold of $\gamma$ or below.

Next, to approximate our Gaussian variational function posterior, we employed the empirical covariance of the function samples. During the process of drawing function samples, we performed an additional training steps. This step involved specifying the optimizer and the number of steps used. We used an SGD optimizer with learning rates of $\lambda_\bx$ and $\lambda_\bu$ for a total of 30 steps during the additional training step for drawing function samples for $\bx$ and $\bu$, respectively.

Lastly, we used the training iteration $N$, a learning rate of $\alpha$ for pseudocoresets, and set the batch size for pseudocoresets to $B$. The hyperparameters used in our paper are summarized in \cref{tab:hyperparameters}.
\begin{table}[b]
\centering
\small
\caption{Hyperparameter for each experiment setting.}
\label{tab:hyperparameters}
\begin{tabular}{cc|ccccccc}
\toprule
                               & ipc & $T$ & $\gamma$ & $\lambda_\bx$ & $\lambda_\bu$ & $N$  & $\alpha$ & $B$  \\ \midrule
\multirow{3}{*}{CIFAR10}       & 1   & 1   & 0.01     & 0.05          & 0.01          & 1000 & 100      & 10   \\
                               & 10  & 2   & 0.1      & 0.05          & 0.01          & 1000 & 1000     & 100  \\
                               & 50  & 10  & 0.2      & 0.05          & 0.01          & 1000 & 1000     & 500  \\ \midrule
\multirow{3}{*}{CIFAR100}      & 1   & 2   & 0.1      & 0.1           & 0.1           & 5000 & 1000     & 100  \\
                               & 10  & 40  & 0.1      & 0.1           & 0.1           & 5000 & 1000     & 1000 \\
                               & 50  & 20  & 0.2      & 0.01          & 0.01          & 300  & 1000     & 5000 \\ \midrule
\multirow{3}{*}{Tiny-ImageNet} & 1   & 2   & 0.1      & 0.1           & 3             & 5000 & 1000     & 100  \\
                               & 10  & 40  & 1        & 0.1           & 3             & 500  & 1000     & 100  \\
                               & 50  & 40  & 1        & 0.1           & 3             & 500  & 1000     & 100  \\ \bottomrule
\end{tabular}
\end{table}

\paragraph{Evaluation}
To implement the SGHMC algorithm, as discussed in \citep{chen2014sghmc} and following the recommendations of \citep{chen2014sghmc}, we employed the SGD with momentum along with an auxiliary noise term.
\[
\begin{cases}
\Delta \theta = v \\
\Delta v = -\eta \nabla \tilde U(x)-\alpha v +\calN(0, 2d).
\end{cases}
\]
we set $\eta=0.03$, $\alpha=0.1$, and $d=0.01/m$, where $m$ represents the coreset size. We perform 1000 epochs of SGHMC and collect samples every 100 epochs.

\section{Additional Experiments}
\label{app:sec:additional_experiments}

\subsection{Cross-architecture generalization}
\label{app:cross-architecture}
\begin{table}[]
\centering
\small
\caption{Averaged test accuracy and negative log-likelihoods for each architecture of the pseudocoreset trained using the ConvNet architecture with CIFAR10 dataset.}
\label{tab:cross_architecture}
\begin{tabular}{cc|ccccc}
\toprule
                    &     & ConvNet    & ResNet18   & ResNet34   & VGG        & AlexNet    \\ \midrule
\multirow{2}{*}{1}  & Acc ($\uparrow$) & 35.71$\spm{0.90}$ & 27.7$\spm{0.96}$ & 22.46$\spm{0.12}$ & 26.33$\spm{0.88}$ & 21.05$\spm{0.43}$ \\
                    & NLL ($\downarrow$) & 3.44$\spm{0.07}$  & 2.87$\spm{0.13}$  & 3.06$\spm{0.02}$ & 5.38$\spm{0.74}$  & 3.13$\spm{0.60}$  \\ \midrule
\multirow{2}{*}{10} & Acc ($\uparrow$) & 62.53$\spm{0.34}$ & 47.51$\spm{1.73}$ & 35.48$\spm{1.22}$ & 47.87$\spm{1.18}$ & 32.27$\spm{0.78}$ \\
                    & NLL ($\downarrow$) & 1.31$\spm{0.01}$  & 1.82$\spm{0.02}$  & 2.41$\spm{0.01}$  & 3.72$\spm{0.09}$  & 2.96$\spm{0.04}$ \\ \midrule
\multirow{2}{*}{50} & Acc ($\uparrow$) & 71.20$\spm{0.36}$ & 62.02$\spm{1.55}$ & 47.97$\spm{3.37}$ & 58.24$\spm{1.63}$ & 52.42$\spm{1.30}$ \\
                    & NLL ($\downarrow$) & 1.03$\spm{0.00}$  & 1.41$\spm{0.05}$  & 2.10$\spm{0.10}$  & 3.03$\spm{0.26}$  & 2.08$\spm{0.07}$  \\ \bottomrule
\end{tabular}
\end{table}
In order to assess the cross-architecture generalization performance of FBPC, we trained pseudocorests of sizes \{1, 10, 50\} using the ConvNet architecture and tested them on various architectures trained from scratch. In addition to the ConvNet architecture used during training, we also evaluated the performance on sophisticated architectures such as ResNet18, ResNet34, VGG, and AlexNet. The results are presented in \cref{tab:cross_architecture}. As evident from \cref{tab:cross_architecture}, FBPC demonstrates considerable performance even on architectures different from those used during training, highlighting its strong cross-architecture generalization capabilities.

\begin{table}[]
\centering
\small
\caption{Test performance of FBPC (CIFAR10, ipc 10) on ResNet18. FBPC is trained with ResNet18 and ResNet18 + ConvNet (multiple architecture training).}
\label{tab:resnet}
\begin{tabular}{c|cc}
\toprule
    & ResNet18 & ResNet18 + ConvNet \\ \midrule
Acc ($\uparrow$) & 50.00    & 54.90              \\
NLL ($\downarrow$) & 1.75     & 1.56               \\ \bottomrule
\end{tabular}
\end{table}

\subsection{Training FBPC on larger neural networks}
To evaluate the scalability of our method to large networks, we trained FBPC with the ResNet18 architecture. Training coreset with larger networks, such as ResNet18, has proven to be challenging and has been explored in only a few previous works~\citep{zhao2021DM}. This is primarily due to the lack of scalable training methods and the tendency for overfitting when training large networks. As a result, even when evaluating ResNet after training on a smaller network like ConvNet, the performance tends to suffer. Furthermore, it has been reported that coreset training directly on ResNet initially yields lower performance compared to training on ConvNet~\citep{zhou2022frepo, yu2023dataset}. 

Our experiments also revealed a similar trend in our findings as shown in the first column of \cref{tab:resnet}. Although FBPC exhibits excellent scalability, making it easy to train on ResNet18 and larger networks, its performance was observed to be lower compared to ConvNet. On the other hand, the second column, ResNet18 + ConvNet, refers to training both ResNet18 and ConvNet simultaneously using the FBPC-multi training approach. In this case, surprisingly, the test performance of ResNet18 actually improved when trained in conjunction with ConvNet using the FBPC-multi training approach. In this case, the ConvNet accuracy was recorded at 60.03, which did not significantly compromise the ConvNet's performance while enhancing the performance of ResNet18. This suggests that training ConvNet acted as a regularizer, preventing overfitting in ResNet18 and enabling it to achieve better performance.

\begin{table}[h]
\centering
\caption{GPU memory usage (GB) for training CIFAR10 FBPC with ipc 10.}
\label{tab:gpu_memory}
\small
    \begin{tabular}{c|ccc}
    \toprule
        ~ & LeNet & ConvNet & ResNet18 \\ \midrule
        \# parameters & $6.2 \times 10^4$ & $3.2 \times 10^5$ & $1.1 \times 10^7$ \\ 
        FBPC & 0.02 & 0.32 & 2.56 \\ 
        BPC-fKL & 0.11 & 3.17 & 12.18 \\ \bottomrule
    \end{tabular}
\end{table}

\begin{table}[h]
    \centering
    \small
    \caption{GPU memory usage (GB) for training CIFAR10 FBPC according to the ipc.}
    \label{tab:gpu_memory_ipc}
    \begin{tabular}{c|ccc}
    \toprule
         ~ & 1 & 10 & 50 \\ \midrule
        FBPC & 0.04 & 0.32 & 1.59 \\ 
        BPC-fKL & 0.41 & 3.17 & 15.59 \\ \bottomrule
    \end{tabular}
\end{table}

\begin{table}[h]
    \centering
    \small
    \caption{Wall-clock time (sec) for 1 step update for training CIFAR10 pseudocoreset according to the ipc.}
    \label{tab:wall_clock_time}
    \begin{tabular}{c|ccc}
    \toprule
        ~ & 1 & 10 & 50 \\ \midrule
        BPC-fKL & $1.04\spm{0.10}$ & $1.37\spm{0.13}$ & $2.59\spm{0.86}$ \\ 
        FBPC & $1.5\spm{0.15}$ & $3.29\spm{0.51}$ & $8.38\spm{0.48}$ \\ \bottomrule
    \end{tabular}
\end{table}

\subsection{Computational cost}
To compare how scalable our approach is compared to posterior matching in the weight space, we measured GPU memory usage corresponding to the number of parameters. As shown in \cref{tab:gpu_memory} and \cref{tab:gpu_memory_ipc}, the memory usage for weight space BPC significantly increases as the number of parameters grows, while FBPC operates very efficiently. Additionally, the coreset ipc increases memory usage proportionally to its size. 
In terms of memory considerations, FBPC excels. However, as shown in \cref{tab:wall_clock_time}, in terms of time, our method requires slightly more time because more SGD steps are needed to acquire the empirical covariance. However, when using FBPC-isotropic, these steps can be reduced, trading off a slight decrease in performance for time savings.

\begin{table}[h]
    \centering
    \small
    \caption{BPC Performances with and without DSA.}
    \label{tab:dsa}
    \begin{tabular}{c|ccc}
    \toprule
        ~ & 1 & 10 & 50 \\ \midrule
        BPC-fKL (no DSA) & $37.26\spm{1.65}$ & $50.48\spm{1.39}$ & $60.75\spm{0.26}$ \\ 
        FBPC (no DSA) & $33.69\spm{2.73}$ & $55.07\spm{1.30}$ & $66.03\spm{0.21}$ \\ 
        FBPC (DSA) & $35.45\spm{0.31}$ & $62.33\spm{0.34}$ & $71.23\spm{0.17}$ \\ \bottomrule
    \end{tabular}
\end{table}

\subsection{Differentiable siamese augmentation}
\cref{tab:dsa} shows the result for BPC-fKL and FBPC without using DSA~\citep{zhao2021dataset} and without any augmentation during training. Interestingly, for the ipc 1 case in BPC-fKL, performance improved when DSA was not applied. However, in all other cases, it is evident that not using DSA leads to an average performance drop of approximately 4.7\%. Moreover, even when training BPC without augmentation, we observe that function space BPC outperforms weight space BPC.

\subsection{Visualization}
In this section, we provide visualizations of the pseudocoreset examples for each dataset.

\begin{figure}[h]
    \centering
    \includegraphics[width=0.7\textwidth]{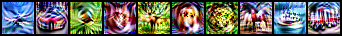}
    \caption{Example images of FBPC for CIFAR10 with ipc 1.}
    \label{fig:cifar10_ipc1}
\end{figure}

\begin{figure}[h]
    \centering
    \begin{subfigure}{0.47\textwidth}
        \centering
        \includegraphics[width=\textwidth]{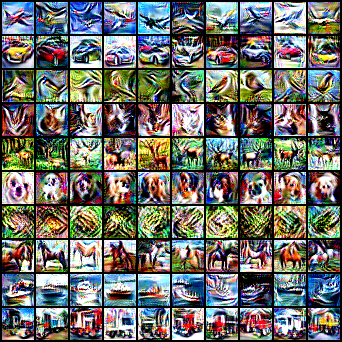}
        \caption{Examples images of FBPC for CIFAR10 with ipc 10. 1 image per class.}
        \label{fig:cifar10_ipc10}
    \end{subfigure}
    \hfill
    \centering
    \begin{subfigure}{0.47\textwidth}
        \centering
        \includegraphics[width=\textwidth]{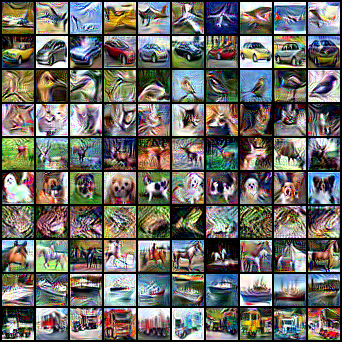}
        \caption{Examples images of FBPC for CIFAR10 with ipc 50. 10 images per class.}
        \label{fig:cifar10_ipc50}
    \end{subfigure}
    \caption{Examples images of FBPCs for CIFAR10 dataset.}
\end{figure}

\begin{figure}[h]
    \centering
    \begin{subfigure}{0.47\textwidth}
        \centering
        \includegraphics[width=\textwidth]{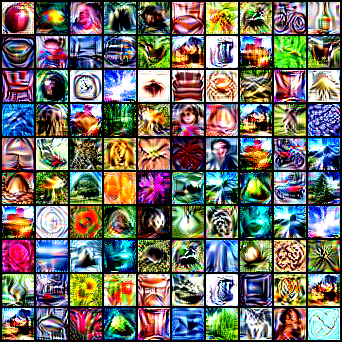}
        \caption{Examples images of FBPC for CIFAR100 with ipc 1. 1 image per class.}
        \label{fig:cifar100_ipc1}
    \end{subfigure}
    \hfill
    \centering
    \begin{subfigure}{0.47\textwidth}
        \centering
        \includegraphics[width=\textwidth]{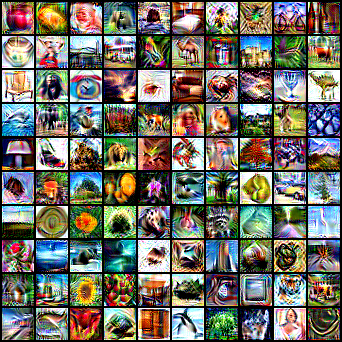}
        \caption{Examples images of FBPC for CIFAR100 with ipc 10. 1 image per class.}
        \label{fig:cifar100_ipc10}
    \end{subfigure}
    \caption{Examples images of FBPCs for CIFAR100 dataset.}
\end{figure}

\begin{figure}[h]
    \centering
    \includegraphics[width=0.8\textwidth]{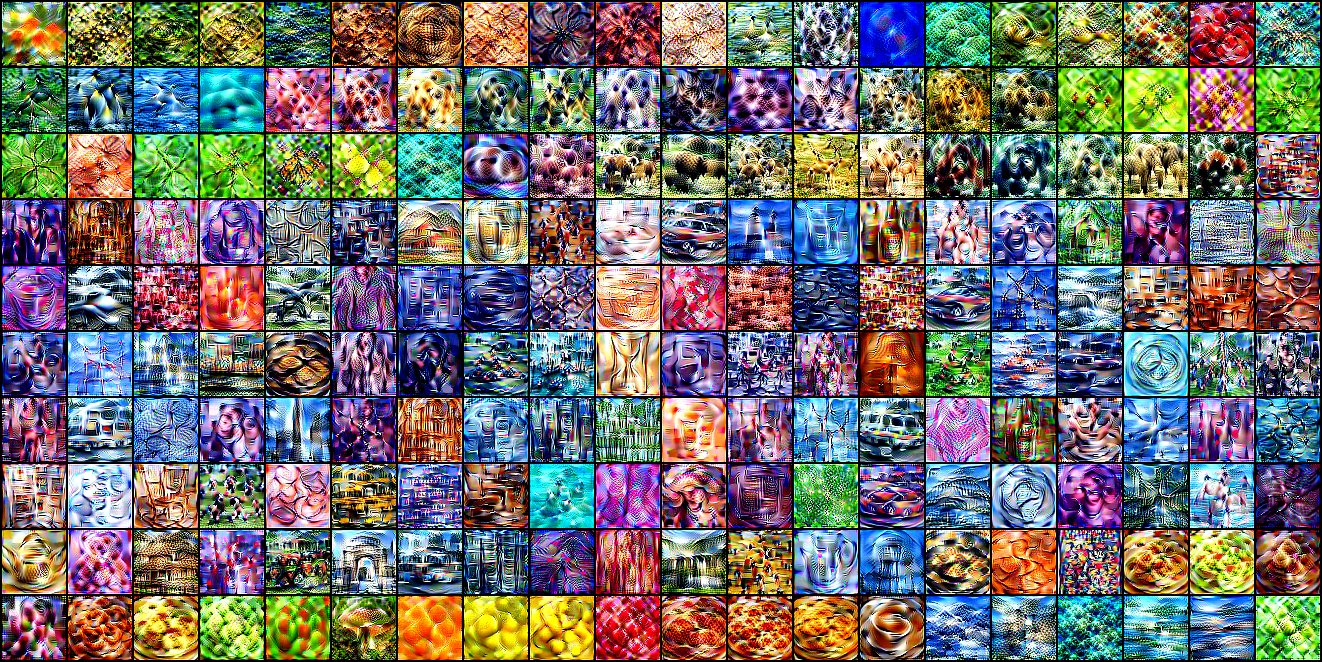}
    \caption{Example images of FBPC for Tiny-ImageNet with ipc 1. 1 image per class.}
    \label{fig:Tiny-ipc1}
\end{figure}

\begin{figure}[h]
    \centering
    \includegraphics[width=0.8\textwidth]{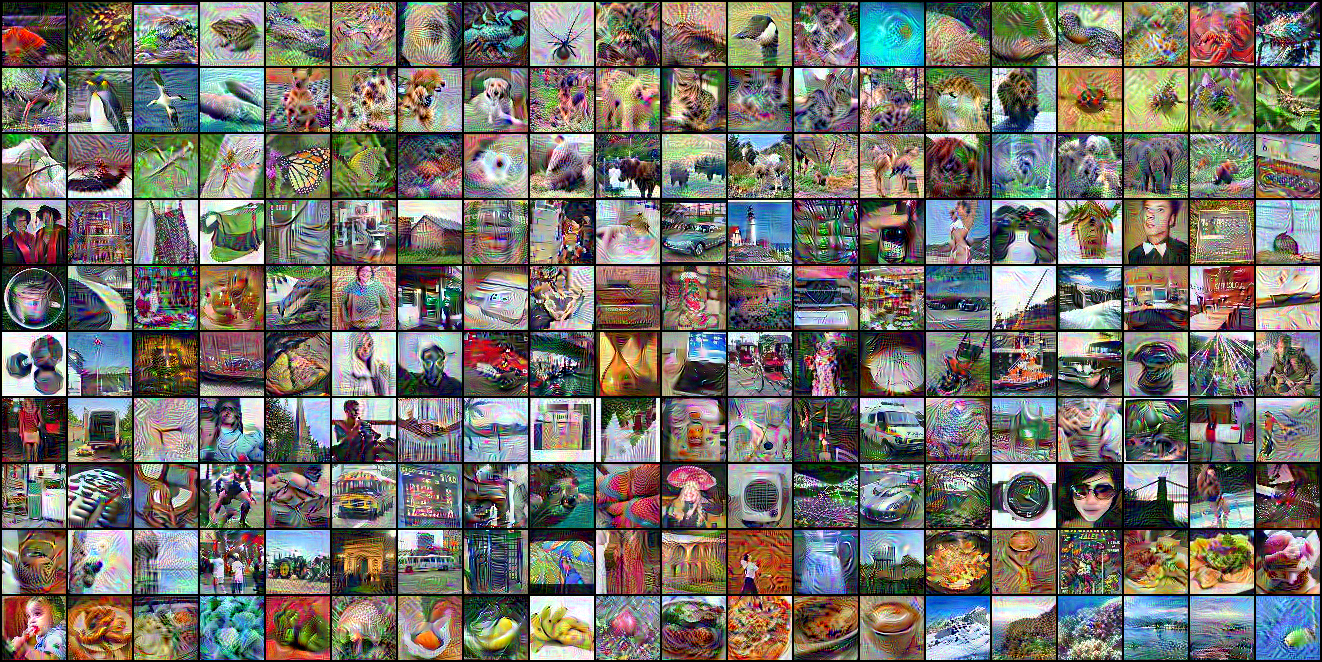}
    \caption{Example images of FBPC for Tiny-ImageNet with ipc 10. 1 image per class.}
    \label{fig:Tiny-ipc10}
\end{figure}

\end{document}